\newtheorem{theorem}{Theorem}
\newcommand{\beqn}{\begin{eqnarray}}
\newcommand{\eeqn}{\end{eqnarray}}
\newcommand{\beqnx}{\begin{eqnarray*}}
	\newcommand{\eeqnx}{\end{eqnarray*}}
\def\W{{\bf W}}
\def\b{{\bf b}}
\newcommand{\thmref}[1]{Theorem~\ref{#1}}
\icmltitlerunning{Deep Graph random Process for Relational-Thinking-based  Speech Recognition}
\begin{document}

\twocolumn[
\icmltitle{Deep Graph Random Process for\\ Relational-Thinking-Based Speech Recognition}




\begin{icmlauthorlist}
	\icmlauthor{Hengguan Huang}{1}
	\icmlauthor{Fuzhao Xue}{1}
	\icmlauthor{Hao Wang}{2}
	\icmlauthor{Ye Wang}{1}
\end{icmlauthorlist}

\icmlaffiliation{1}{National University of Singapore}
\icmlaffiliation{2}{Massachusetts Institute of Technology}

\icmlcorrespondingauthor{Ye Wang}{wangye@comp.nus.edu.sg}

\icmlkeywords{Bayesian deep learning, Bayesian nonparametric deep learning, Graph random process, Relational thinking }

\vskip 0.3in
]



\printAffiliationsAndNotice{} 

\begin{abstract}

Lying at the core of human intelligence, relational thinking is characterized by initially relying on innumerable unconscious percepts pertaining to relations between new sensory signals and prior knowledge, consequently becoming a recognizable concept or object through coupling and transformation of these percepts. Such mental processes are difficult to model in real-world problems such as in conversational automatic speech recognition (ASR), as the percepts (if they are modelled as graphs indicating relationships among utterances) are supposed to be innumerable and not directly observable. In this paper, we present a Bayesian nonparametric deep learning method called deep graph random process (DGP) that can generate an infinite number of probabilistic graphs representing percepts.  We further provide a closed-form solution for coupling and transformation of these percept graphs for acoustic modeling.  Our approach is able to successfully infer relations among utterances without using any relational data during training. Experimental evaluations on ASR tasks including CHiME-2 and CHiME-5 demonstrate the effectiveness and benefits of our method.

\end{abstract}

\section{Introduction}

Relational thinking is believed to be a fundamental human learning process. In this type of thinking, people obtain sensory signals such as sounds, sights, and smells without consciously perceiving them, but those signals nonetheless lead to thought outcomes. For example, take the process of a baby learning to listen or speak \cite{alexander2016relational}. This learning process is characterized by initially relying on innumerable unconscious percepts pertaining to relations between current sensory signals and prior knowledge \cite{malmberg2012relationship,murai2018optimal}, then subsequently discovering recognizable concepts or objects through coupling and transformation of these percepts \cite{alexander2016relational}. In contrast, relational reasoning is a higher-level learning process that intentionally or consciously reasons about relations among objects or concepts. While relational reasoning has inspired perspectives of artificial intelligence \cite{hudson2019learning,smolensky1987connectionist}, relational thinking is largely unexplored in solving machining learning problems.

Human conversation is essentially a process of exchanging thoughts between two or more talkers.  Speech processing (or, specifically, speech recognition) is one of the critical components of this highly complex process, and neurobiology research acknowledges that this component is connected to human thinking \cite{hickok2007cortical,birjandi2015review}. In contrast, automatic speech recognition (ASR) has long been treated as a typical pattern matching problem in the machine learning area \cite{rabiner1989tutorial,hinton2012deep,amodei2016deep}. Ironically, this task is generally decomposed into two parts, namely acoustic modeling and linguistic decoding, and the essential relational thinking process is ignored \cite{alexander2016relational}. This process is difficult to incorporate into a traditional speech recognition system because 
the percepts (e.g. mental impressions formed while hearing sounds) involved in relational thinking are supposed to be innumerable and not directly observable. However, the dialogue history during the conversation might reflect such underlying mental processes, allowing an indirect way of modeling speech processing \cite{derix2014speech}. 
Even when only using one previous dialogue embedding as an additional input to an acoustic model, the proposed framework in \cite{moses2019real} achieves significantly better results on a spoken Q\&A dataset. More powerful representation learning might be achieved by weighting over multiple context inputs (e.g. utterances)  \cite{palm2018recurrent,chen2016end,pundak2018deep,zoph2016multi,kim2018dialog}.  

Despite recent advances, it is still challenging to explicitly modeling the percept of relational thinking for acoustic modeling due to percept's unconscious nature. Currently, the hybrid acoustic recurrent neural network hidden Markov model (RNN-HMM) still outperforms end-to-end encoder-decoder approaches for acoustic modeling in many aspects~\cite{luscher2019rwth}, even though the former is getting more popular. Generally, RNNs do a good job of capturing long-term temporal dependencies of sequential inputs but a poor job at representing complex relationships.  Therefore, when handling tasks that require relational thinking, the first-order dependencies between adjacent hidden states imposed by RNNs may hinder the extraction of complex structural information. One option to get around this problem is to process such data, which has a highly complex structure, with some members from the family of graph neural networks. However, most existing techniques either require the input to be graph data \cite{kipf2016variational, kipf2016semi,bojchevski2018netgan}, or to be supervised toward training targets of graph structure \cite{samanta2018designing}. As such, they are not applicable for a task such as ours that marries relational thinking with acoustic modeling, in that the relations involved in percepts are difficult to obtain.     

To alleviate these deficiencies in acoustic modeling, we propose a new Bayesian nonparametric deep learning method called deep graph random process (DGP) that can model percepts involved in relational thinking as probabilistic graphs without using any relational data during training. Specifically, a percept in our work is modeled as relations between a current utterance and its history. We assume the probability of the existence of such a relation to be close to zero due to the unconsciousness of the percept. Given an utterance and its history,  we generate an infinite number of percepts represented by probabilistic graphs contained in the DGP, in which each node depicts a representation of an utterance and each edge corresponds to relation between two nodes.
We assume that the edge of percept graph is distributed according to a Bernoulli distribution with the probability of edge existence being close to zero. 

It is computationally intractable to combine innumerable graphs by simply summing over their adjacency matrix. We therefore find an analytical solution by creating an equivalent new graph where the edge is represented by a Binomial variable.  Since Binomial distribution of such variable involves an infinity as one of its parameters, we further find a close form solution for inference and sampling of such Binomial distribution via an approximate Gaussian distribution with bounded approximation errors.
To transform the new graph to be ``conscious'' or task-specific, we weight each edge of the new graph using another Gaussian variable conditioned on the edge drawn from the Binomial variable. Subsequently, we calculate the graph embedding \cite{kipf2018neural} over the transformed graph and using it as an additional input for acoustic modeling. To jointly optimize above components, we 
adopt variational inference framework and successfully derive an effective evidence lower bound (ELBO).  

The experiments on CHiME-2 \cite{chime2} and CHiME-5 \cite{barker2018fifth} show that our new model consistently outperforms baseline models for the speech recognition task. Notably, we demonstrate the effectiveness of our method in learning relations among utterances via both qualitative and quantitative studies on  synthetic relational SWitchBoard \cite{godfrey1992switchboard} data.      
\section{Related Work}
\label{bg}
\subsection{Bayesian Deep Learning of Graphs} 

Several Bayesian deep learning models for graphs have recently been proposed. For example, relational stacked denoising auto-encoders (RSDAE)~\cite{RSDAE} were developed as a principled model to incorporate graph structures into probabilistic auto-encoders, significantly improving representation learning. As a follow-up work, relational deep learning (RDL)~\cite{RDL} is a supervised and fully Bayesian version of RSDAE to directly tackle the task of link prediction. Along a different line of research, graph auto-encoders (GAEs) \cite{kipf2016variational} were proposed to learn real-world graph data in an unsupervised training manner. Different from RSDAE and RDL, they employ a graph convolutional networks (GCN) \cite{kipf2016semi} encoder to represent nodes using low-dimensional vectors, and use a decoder to reconstruct the adjacency matrix. These models have found applications in discovering chemical molecules \cite{samanta2018designing}, modeling citation networks \cite{kipf2016variational}, and constructing knowledge graphs \cite{chen2018variational}. 

Regularized Graph Variational Autoencoders (RGVAE) \cite{pan2018adversarially} improve upon GAEs through regularizing the output distribution of the decoder with an adversarial regularization framework. 
\cite{bojchevski2018netgan} further extends this approach with a random-walk-based generator. However, these approaches assume the model data to be a static graph,  limiting their model generalizability in handling real-world problems with dynamic graphs. Variational graph RNN (VGRNN) \cite{hajiramezanali2019variational} attempts to mitigate this problem by combining GCN, RNN, and GAEs, allowing the evolution of dynamic graphs to be captured.  Though these existing works are successful in generating graphs, static or dynamic, they require graph annotations. Therefore, they are not applicable to our task in which the annotations of relations among utterances are not available during training.    
\subsection{Variational Acoustic Modeling}
An RNN-HMM acoustic model is a major component of a hybrid RNN-HMM ASR system \cite{lstm:graves.timit.icassp2013}. It can be viewed as an ``HMM states classifier''. Specifically,  given $M$ training utterances $\{U_{i}\}_{i=1}^{M}$, where $U_{i}$  involves  a sequence of acoustic features  $ \mathbf{X}_{i}= \{\mathbf{x}_{i,1},\mathbf{x}_{i,2}, . . ., \mathbf{x}_{i,T}\}$ and training labels $ \mathbf{Y}_{i}= \{\mathbf{y}_{i,1},\mathbf{y}_{i,2}, . . ., \mathbf{y}_{i,T}\}$. 
An RNN taking as input the acoustic feature $\mathbf{x}_{i,t}$  is adopted to estimate posterior probabilities for $K$ HMM states of context-dependent phones:
\begin{equation}
    \hat{\mathbf{y}}_{i,t}=\mathrm{softmax}(\mathrm{RNN}(\mathbf{x}_{i,t}))
\end{equation} 
where $\hat{\mathbf{y}}_{i,t}$ is the HMM state prediction. Such a model can be optimized by minimizing the negative log-likelihood or cross-entropy: $ \sum_{i}^{M} - \log P(\mathbf{Y}_{i}|\mathbf{X}_{i})$. 

Many uncertainties can be encountered when modeling speech signals using a RNN-HMM acoustic model. For instance, 
the background noise has a complicated influence on the speech signal. The RNN-HMM acoustic model is limited in handling such uncertainties because an RNN is essentially a deterministic function. A variational RNN (VRNN) \cite{chung2015recurrent} has been developed to cope with this  limitation. It introduces a latent variable $\mathbf{z}_{i,t}$  to capture the uncertainty of acoustic features at time $t$. Such a latent variable is assumed to have a Gaussian prior distribution $p(\mathbf{z}_{i,t}|\mathbf{h}_{i,t-1})$ dependent on the previous RNN hidden state $\mathbf{h}_{i,t-1}$. Its posterior distribution is approximated by a variational distribution $q(\mathbf{z}_{i,t}|\mathbf{x}_{i,t},\mathbf{h}_{i,t-1})$, allowing  the use of the evidence lower bound (ELBO) for joint learning and inference. It can be written as:
\begin{equation}
    \begin{split}
        \sum_{i=1}^{M} \{ 
&\mathrm{KL}(q(\mathbf{Z}_{i} |\mathbf{X}_{i},\mathbf{H}_{i})
||
p(\mathbf{Z}_{i} | \mathbf{H}_{i})) 
\\&- \mathbb{E}_{\mathbf{Z}_{i}}[\log P(\mathbf{Y}_{i}|\mathbf{X}_{i},\mathbf{Z}_{i})]
\}
    \end{split}
\end{equation}
where latent variables $\mathbf{Z}_{i}=\{\mathbf{z}_{i,1},\mathbf{z}_{i,2}, . . ., \mathbf{z}_{i,T}\}$ and hidden states   $\mathbf{H}_{i}=\{\mathbf{h}_{i,0},\mathbf{h}_{i,1}, . . ., \mathbf{h}_{i,T-1}\}$. The sampling from the posterior distribution is achieved by using re-parameterization based Monte Carlo (MC) estimation \cite{kingma2013auto}. This model can then be trained via stochastic gradient descent. It has been observed that the gradient obtained using this technique is more stable than that of the score function estimator~\cite{glynn1990likelihood}. 

One major limitation of the VRNN-HMM acoustic model is that the learned latent distribution exhibits non-interpretable representations because the approximating distributions are assumed to take a general form which are lacking in expressive power. For instance, frame-level latent variables adopted by VRNN are not powerful enough to describe the relational structure among utterances. Addressing this issue is one of our model's focuses.  

\section{Relational Thinking Modelling }

\subsection{Problem Formulation and Preliminary}  

\label{mdl}
 \begin{figure} 
 \vskip -0.05in
    \begin{center}
        \resizebox{0.45\textwidth}{!}{\includegraphics{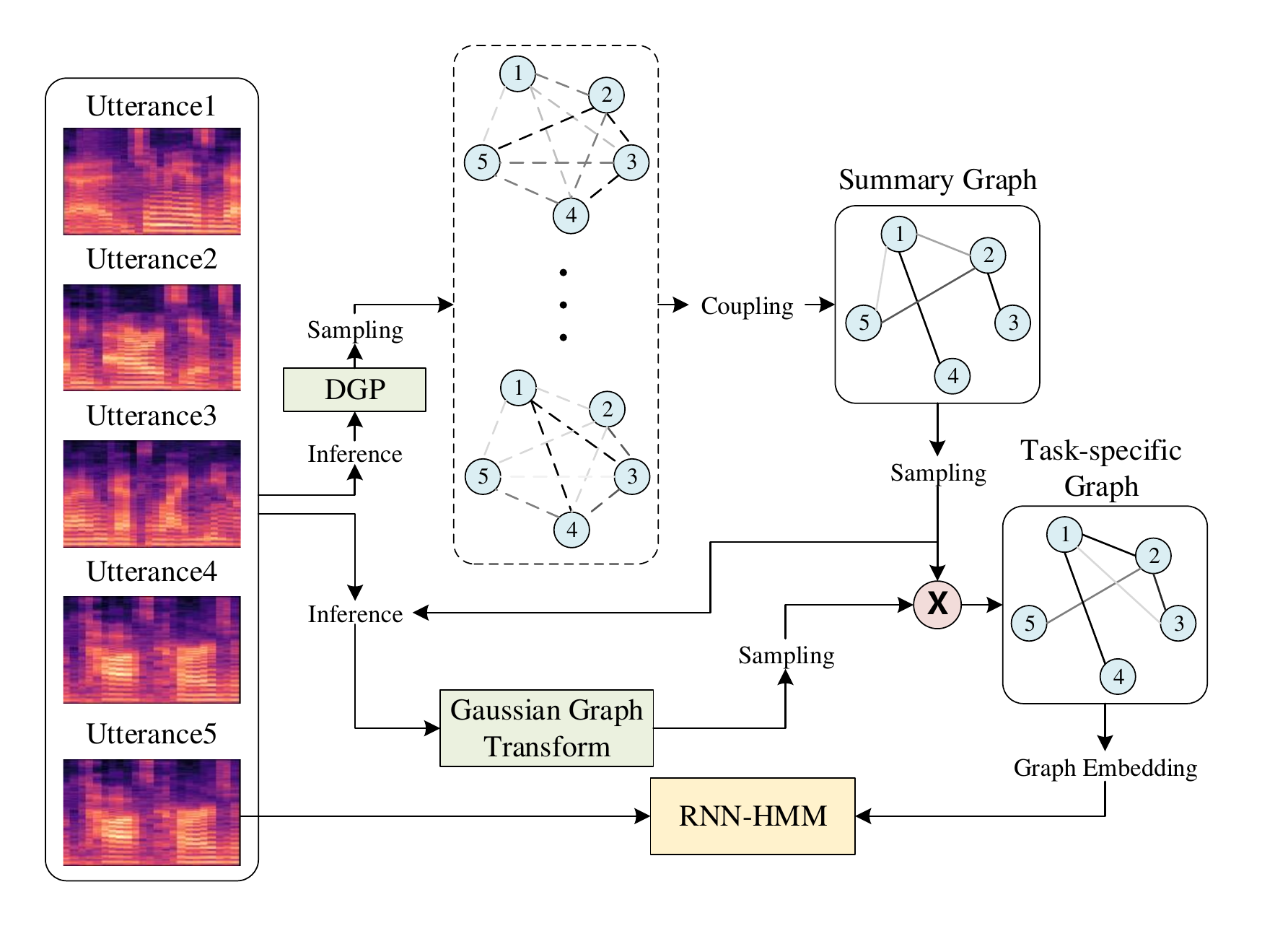}}
        \caption{ Architecture of RTN for acoustic modeling  }
        \label{fig:overview}
    \end{center}
    \vskip -0.2in
\end{figure}

In a natural conversation, there is a relational structure among consecutive utterances which depends on the speaker's intention in producing those utterances. For instance, such a structure may contains some discourse relations, such as question-answer-pair, contrast, comment and so on.
One common way of representing such a structure is a graph in which a node corresponds to an utterance embedding and an edge corresponds to the relationship between the two connected nodes, e.g. the co-occurrence of dialogue acts \cite{Stolcke-etal:2000} of two utterances.  According to relational thinking \cite{alexander2016relational,malmberg2012relationship,murai2018optimal}, before producing such complex relational data among utterances, the listener is assumed to unconsciously generate an infinite amount of similar data except that the probability of edge existence is very small. These data are not recognizable until they are somehow combined and transformed for a specific task, e.g. speech recognition.

Suppose we are given an utterance $U_{i}$  involving a sequence of acoustic features  $ \mathbf{X}_{i}= \{\mathbf{x}_{i,1},\mathbf{x}_{i,2}, . . ., \mathbf{x}_{i,T}\}$ and training labels $ \mathbf{Y}_{i}= \{\mathbf{y}_{i,1},\mathbf{y}_{i,2}, . . ., \mathbf{y}_{i,T}\}$ as well as $o$ previous utterances $U_{i-o:i-1}$. We aim to simulate relational thinking by initially constructing an infinite number of graphs  $\{G^{(k)}\}_{k=1}^{+\infty}$, where $G^{(k)}(V^{(k)},E^{(k)})$ is a graph representing the $k$-th percept, with $V^{(k)}$ and $E^{(k)}$ as the node and edge sets. Consequently, these percept graphs are combined and further transformed via a graph transform $\mathbf{S}$. In this paper, a node $v^{(k)}_{i}$ corresponds to the embedding of utterance $U_{i}$ for the $k$-th percept , and an element $\alpha^{(k)}_{i,j}$ of the adjacency matrix $\mathbf{A}^{(k)}$  indicates an edge between node $i$ and node $j$. It is worth noting that we assume that the probability of edge existence in such graphs is close to zero. Our ultimate goal is to model the distribution of training labels conditioned on acoustic features and these graphs as well as a graph transform, i.e. $P(\mathbf{Y}_{i}|\mathbf{X}_{i},\{G^{(k)}\}_{k=1}^{+\infty}, \mathbf{S})$, with a closed-form solution.

\subsection{Deep Graph Random Process}

Deep graph random process (DGP) is a stochastic process designed to describe the latent mechanisms governing the generation of an infinite number of  probabilistic graphs representing percepts. A DGP contains a fixed number of nodes that are shared among percept graphs. Such nodes may represent sensory input from multiple sensory modalities such as olfaction, vision and audition, depending on the specific problem of interest. For example, in acoustic modeling, each node represents an utterance whose embedding is calculated by a neural network encoding a sequence of acoustic features for the $i$-th utterance:
\begin{equation}
\mathbf{v}_{i}=f_{\theta}(\mathbf{X}_{i})
\end{equation} 
where $f_{\theta}$ is a neural network. 

At DGP's core are a series of deep Bernoulli processes (DBP) as  building blocks, each being responsible to generate edges between two nodes of DGP. DBP is a special type of Bernoulli process we propose in which the probability of the existence of the edge is assumed to be close to zero due to the unconsciousness of the percept. 
Specifically, by considering an infinite number of edges $\{\alpha^{(k)}_{i,j}\}_{k=1}^{+\infty}$ sampled from the DBP between node $i$ and node $j$, we have:
\begin{equation}
\{\alpha^{(k)}_{i,j}\}_{k=1}^{+\infty}\sim \mathrm{DBP}(\mbox{Bern}(\lambda_{i,j}))
\end{equation}    
where $\mbox{Bern}(\lambda_{i,j})$ is the Bernoulli distribution with the probability of the edge existence $\lambda_{i,j}$. As a result, our DGP is capable of generating innumerable probabilistic graphs:
\begin{equation} \label{6}
 \{G^{(k)}\}_{k=1}^{+\infty} \sim \mathrm{DGP}(\mathbf{v}_{i-o:i},\{\mathrm{DBP}(\mbox{Bern}(\lambda_{*,*}))\})
\end{equation}
where
$\{\mathrm{DBP}(\mbox{Bern}(\lambda_{*,*}))\}$ refers to a collection of DBP involved in DGP.

\subsubsection{Coupling of innumerable probabilistic graphs}

Coupling is one of the key steps of relational thinking. The goal of coupling is to obtain a summary of an infinite number of percepts. However, it is computationally intractable to combine innumerable percept graphs by simply summing over $\mathbf{A}^{(k)}$ for each graph generated by DGP. We seek to construct an equivalent graph that can serve as a summary or representation of the original innumerable graphs.
We refer to such graph as \textit{summary graph}. They keep the original node set and update each edge by sampling from a Binomial distribution:
\begin{equation}
\tilde{\alpha}_{i,j}=\sum_{k=1}^{+\infty}\alpha_{i,j}^{(k)},~~~~~~~~ \tilde{\alpha}_{i,j}\sim \mathcal{B}(n,\lambda_{i,j}),
\end{equation} 
where $n\rightarrow +\infty$ and $\lambda_{i,j}\rightarrow 0$.


\subsubsection{Inference and sampling of edges of summary graph}

We assume the edge $(i,j)$ of summary graph is distributed according to a Binomial distribution $\mathcal{B}(n,\lambda_{i,j})$, with $n\rightarrow +\infty$ and $\lambda_{i,j}\rightarrow 0$.  Drawing inspiration from VRNN \cite{chung2015recurrent}, the parameters of the approximate posteriors are estimated from an RNN encoding acoustic features. However, unlike VRNN, DGP contains the approximate posterior $q(\tilde{\alpha}_{i,j} |\mathbf{X}_{i-o:i})$, whose inference and sampling cannot be directly solved in a computationally tractable manner due to the infinity $n$.     

\begin{theorem}\label{thm:t1}
Let $\mathcal{N}(\mu,\sigma^{2})$ denotes a Gaussian distribution with $\mu<1/2$, and let $\mathcal{B}(n,\lambda)$  denotes a Binomial distribution with $n\rightarrow +\infty$ and $\lambda\rightarrow 0$, where  $n$ is increasing while $\lambda$ is decreasing.    
There exists a real constant $m$ such that if $m=n\lambda$ and if we define:
\begin{align*}
 f_{1}(x) &= \mathrm{KL}(\mathcal{N}(x,x(1-x)) || \mathcal{N}(\mu,\sigma^{2}) )\\
 f_{2}(x) &= \mathrm{KL}(\mathcal{N}(x,x(1-x)) || \mathcal{N}(n \lambda,n \lambda(1-\lambda) )\\
 f_{2}^{*} &= \min_x f_{2}(x), \  where \  x \in (0,1)
\end{align*}
we have that:  $ f_{1}(x)$ attains its minimum on the interval $(0,1)$ and $ f_{2}(x)-f_{2}^{*}$ is bounded on the interval $(0,\sqrt{2}/2-1/2)$, with:
\begin{equation}
x=m=\frac{1+l-\sqrt{1+l^{2}}}{2}, \ where  \ l=\frac{2\sigma^{2}}{1-2\mu}
\nonumber
\end{equation}
\end{theorem}

Suppose we are given a Gaussian distribution $\mathcal{N}(\tilde{\mu}_{i,j},\tilde{\sigma}_{i,j}^{2})$, whose parameter $\tilde{\mu}_{i,j}$ is specifically parameterized by the neural network that can guarantee that $\tilde{\mu}_{i,j}<1/2$. By De Moivre–Laplace theorem \cite{sheynin1977laplace}, we have that $\mathcal{N}(n \lambda_{i,j},n \lambda_{i,j}(1-\lambda_{i,j})$ is a good approximation  for $\mathcal{B}(n,\lambda_{i,j})$. They are asymptotically equivalent as $n$ increases. Let $m_{i,j}=n\lambda_{i,j}$, with \thmref{thm:t1} (see Supplement for the Proof of Theorem 1), direct parameterization of both the infinite parameter $n$ and the near-zero parameter $\lambda_{i,j}$  can be avoided, which in turn allows for the re-parametrization trick of~\cite{kingma2013auto} to be used. This trick draws samples from such Binomial distribution via its Gaussian proxy $\mathcal{N}(m_{i,j},m_{i,j}(1-m_{i,j}))$.

\subsection{Application of DGP for Acoustic Modelling}
\label{rpu}

Another essential aspect of relational thinking is that it transforms innumerable unconscious percepts into a recognisable notion of knowledge. 
Here, we aim to extract an informative representation from the summary graph representing innumerable percept graphs for our downstream task: acoustic modelling. This is achieved by transforming the summary graph through weighting each edge with a Gaussian variable $s_{i,j}$:
\begin{equation}
    \bar{\alpha}_{i,j}=s_{i,j}*\tilde{\alpha}_{i,j}
\end{equation}
We further assume such Gaussian variable to be conditioned on the edge $\tilde{\alpha}_{i,j}$ of the summary graph, in avoiding the distribution of such Gaussian variable (if it is independent of the edge $\tilde{\alpha}_{i,j}$) behaves randomly when some sample values of the edge $\tilde{\alpha}_{i,j}$ are close to zero. It is defined as:
\begin{equation}
s_{i,j}|\tilde{\alpha}_{i,j}\sim \mathcal{N}(\tilde{\alpha}_{i,j}* \mu_{i,j},\tilde{\alpha}_{i,j}*\sigma^{2}_{i,j})
\end{equation}
We refer to such operations as a \textit{Gaussian graph transform}. The resultant graph is called a \textit{task-specific graph}.

We then follow \cite{kipf2018neural} to extract the graph embedding $\mathbf{e}_{i}$ from the transformed graph with node $\mathbf{v}_{i}$ corresponding to the current utterance. It can be written as:
\begin{equation}
\mathbf{e}_{i}= \sum_{
(j,k)\in \left \{(j,k) | j<k\leq i, (j,k) \in \bar{E} \right \} 
 }\bar{\alpha}_{j,k} \bar{f_{\theta}}([\mathbf{v}_{j},\mathbf{v}_{k}])
\end{equation}
where $\bar{f_{\theta}}$ is a neural network and $\bar{E}$ is the edge set of the transformed graph.

Next, we use the generated graph embedding as an additional input of our acoustic model. We refer to the whole framework as a relational thinking network (RTN) (Figure \ref{fig:overview}).  In this paper, we adopt the simple recurrent unit \cite{sru} as the basic building block of the RTN. The SRU simplifies the architecture of LSTM and dramatically increases computational speed with nearly no ASR performance drop. 
The updating formulas of our RTN are: 
\begin{eqnarray}
&\left[\hat{\mathbf{r}}_{i,t},\hat{\mathbf{f}}_{i,t},\hat{\mathbf{c}}_{i,t}\right] = \mathbf{W}_x \left [ \mathbf{x}_{i,t},  \mathbf{e}_{i}\right ] +\mathbf{b} \\
&\mathbf{r}_{i,t}  =  \sigma (\hat{\mathbf{r}}_{i,t}) \\
&\mathbf{f}_{i,t}  =  \sigma (\hat{\mathbf{f}}_{i,t}) \\
&\mathbf{c}_{i,t}  =  \mathbf{f}_{i,t} \odot \mathbf{c}_{i,t-1}+ (1-\mathbf{f}_{i,t}) \odot \hat{\mathbf{c}}_{i,t} \\
&\mathbf{h}_{i,t}  =  \mathbf{r}_{i,t} \odot \mathbf{c}_{i,t} + (1-\mathbf{r}_{i,t}) \odot W_h  \left [ \mathbf{x}_{i,t},   \mathbf{e}_{i}\right ] 
\end{eqnarray}
where 
$\mathbf{r}_{i,t}$ 
is the reset gate output,
$\mathbf{f}_{i,t}$
is the forget gate output,
$\mathbf{c}_{i,t}$
is the memory cell output,
$\W_x$ and $\W_{h}$ are the weight matrices,
$\b$ is the gate bias vector, 
$\mathbf{h}_{i,t}$
is the hidden state output,
any quantity with a `hat' (e.g. $\hat{\mathbf{c}}_{i,t}$) is the activation value of the quantity before an
activation function is applied,
$\odot$ is the element-wise multiplication operation, and 
$\sigma$ is the sigmoid function.

\subsection{Learning}
\label{lr}

We adopt variational inference to jointly optimise DGP, the Gaussian graph transform, and the acoustic model. DGP can be equivalently represented as two type of random variables: Bernoulli variables related to edges of the percept graph and Binomial variables related to edges of the summary graph. Although these two random variables take different forms in terms of probability distributions, we can use these different random variables to describe the same random process data. Therefore, specifying the Binomial variables of a DGP completely determines the graph random process as a whole. The resulting objective is to maximize the evidence lower bound (ELBO):
\begin{equation}
\begin{split}
\sum_{i=1}^{M} \{ 
&\mathrm{KL}(q(\tilde{\mathbf{A}},\mathbf{S} |\mathbf{X}_{i-o:i})
||
p(\tilde{\mathbf{A}},\mathbf{S} |\mathbf{X}_{i-o:i})) 
\\&- \mathbb{E}_{\tilde{\mathbf{A}},\mathbf{S}}[\log P(\mathbf{Y}_{i}|\mathbf{X}_{i},\tilde{\mathbf{A}},\mathbf{S})]
\}
\end{split}
\end{equation}
where the adjacency matrix of the summary graph $\tilde{\mathbf{A}}=[\tilde{\alpha}_{i,j}]$; the Gaussian graph transform matrix $\mathbf{S}=[\tilde{s}_{i,j}]$. (see Section \ref{imp} for the parameterization of the approximate posterior $q(\tilde{\mathbf{A}},\mathbf{S} |\mathbf{X}_{i-o:i})$ and the prior $p(\tilde{\mathbf{A}},\mathbf{S} |\mathbf{X}_{i-o:i})$) 
Since each element of the Gaussian graph transform matrix is conditioned on the Binomial variable for the same edge of the summary graph, the $\mathrm{KL}$ term can be further written as:
\begin{equation}
\begin{split}
\sum_{(i,j)\in \tilde{E}} \{
&\mathrm{KL}(\mathcal{B}(n,\tilde{\lambda}_{i,j})
||
\mathcal{B}(n,\tilde{\lambda}_{i,j}^{(0)}) 
\\& + \mathbb{E }_{\tilde{\alpha}_{i,j}   }   [ 
 \mathrm{KL}(\mathcal{N}(\tilde{\alpha}_{i,j}  * \mu_{i,j},\tilde{\alpha}_{i,j}*\sigma^{2}_{i,j})
\\&|| \
\mathcal{N}(\tilde{\alpha}_{i,j}* \mu_{i,j}^{(0)},\tilde{\alpha}_{i,j}* {\sigma_{i,j}^{(0)}}^{2})] \
\}
\end{split}
\end{equation}
Unfortunately, while calculation of the second term is straightforward,  the first $\mathrm{KL}$ term is computationally intractable as $n\rightarrow +\infty$. 

\begin{theorem}\label{thm:t2}
Suppose we are given two Binomial distributions, $\mathcal{B}(n,\lambda)$ and $\mathcal{B}(n,\lambda^{0})$ with $n\rightarrow +\infty$, $\lambda^{0}\rightarrow 0$ and $\lambda\rightarrow 0$ , where  $n$ is increasing while $\lambda$ and $\lambda^{0}$ are decreasing.  There exists a real constant $m$ and another real constant $m^{(0)}$, such that  if $m=n\lambda$ and $m^{(0)}=n\lambda^{(0)}$ and if  $\lambda > \lambda^{(0)} $, we have:
\begin{equation}
\begin{split}
   \mathrm{KL}(\mathcal{B}(n,\lambda) ||& \mathcal{B}(n,\lambda^{0}))
 <m\log\frac{m}{{m}^{(0)}} 
\\&+ (1-m )  \log\frac{1-m+m^{2}/2}{1-m^{(0)}+{m^{(0)}}^{2}/2} \nonumber 
\end{split}
\end{equation}
\end{theorem}

By \thmref{thm:t2} (the proofs are provided in the supplementary material), we have a closed-form solution that is irrelevant to $n$ for the ELBO.

\subsection{Detailed Implementation of RTN}
\label{imp}
\paragraph{Node Embedding of DGP} In our following experiments, we adopt the neural network $f_{\theta}$ to calculate the node embedding of DGP.  The architecture of such a neural network has 6 SRU layers (each with 1024 hidden states), which is firstly followed by a max-pooling layer and then a single-layer multi-Layer perceptron (MLP). Here we show the detailed formulation:
\begin{align*}
    \mathbf{H}_{i}&= \mathrm{SRU}(\mathbf{X}_{i}) 
\\   \tilde{\mathbf{v}}_{i}&= \max \limits_{t}(\mathbf{H}_{i})
\\ \mathbf{v}_{i}&=   \mathrm{ReLU}(\mathrm{MLP}(\tilde{\mathbf{v}}_{i}))
\end{align*}
where SRU has 6 stacked layers; $ \max$ is an element-wise max operation over the frames in the utterance; the input and output size of the MLP is 1024 and 128 respectively. 

\paragraph{Inference of Binomial Edge Variables of DGP } For approximate posterior on the Binomial edge variable, before calculation of $m_{i,j}$, theorem \ref{thm:t1} requires a Gaussian distribution $\mathcal{N}(\tilde{\mu}_{i,j},\tilde{\sigma}_{i,j}^{2})$ with $\tilde{\mu}_{i,j}<1/2$. We adopted two three-layer MLP (with 128 hidden nodes per layer) taking as input node embeddings $\mathbf{v}_{i-9:i}$  and compute $\tilde{\mu}_{i,j}$ and $\tilde{\sigma}_{i,j}$ respectively. To avoid the explosion of $\frac{1}{1-2\tilde{\mu}_{i,j}}$, we introduce another variable $n_{i,j}$ and define it 
as:
\begin{equation}
  n_{i,j}=\frac{1}{1-2\tilde{\mu}_{i,j}} = \mathrm{softplus}(\tilde{\mu}_{i,j}) + \epsilon 
\end{equation}
such that $n_{i,j}$ is lower bounded by $\epsilon$. In our experiments, $\epsilon$ was set to 0.01 (such hyperparameter can be tuned to further improve
performance). Then $m_{i,j}$ can be calculated as:
\begin{equation}
    m_{i,j}=\frac{1+2n_{i,j}\tilde{\sigma}_{i,j}^{2}-\sqrt{1+ 4n_{i,j}^{2}\tilde{\sigma}_{i,j}^{4}   }}{2}
\end{equation}
The bound variable $n_{i,j}$ helps to avoid the explosion of $\log(m_{i,j})$ involved in our final objective. For the prior on the Binomial edge variable, the parameter $m_{i,j}^{(0)}$ is learned by a three-layer MLP (with 128 hidden nodes per layer) taking as input node embeddings $\mathbf{v}_{i-9:i}$.

\paragraph{Gaussian Graph Transform} To compute the parameters of the approximate posterior  $\mathcal{N}(\tilde{\alpha}_{i,j}* \mu_{i,j},\tilde{\alpha}_{i,j}*\sigma^{2}_{i,j})$ involved in Gaussian graph transform,  two three-layer MLP (with 128 hidden nodes per layer) taking as input node embeddings $\mathbf{v}_{i-9:i}$ was adopted to calculate $\mu_{i,j}$ and $\sigma_{i,j}$ respectively. The parameters of the corresponding prior is obtained in a similar way. 

\section{Experiments}
\label{exp}

We perform the preliminary experiments for the proposed RTN on a reading speech recognition corpus, CHiME-2 \cite{chime2}.
We then evaluate our method on CHiME-5 \cite{barker2018fifth},  a more challenging conversational speech recognition dataset where the data is collected from everyday home environments. We finally investigate the interpretability of our model on a synthetic relational speech dataset: synthetic relational SWitchBoard \cite{godfrey1992switchboard} (RelationalSWB). 

\begin{table}
  \caption{Model configuraions for all datasets and the training time for CHiME-2. L: number of layers; N: number of hidden states per layer;   P: number of model parameters; T: Training time per epoch (hr).}
\label{tbl-chime2_time}
\begin{footnotesize}
  \centering
  \begin{tabular}{lcccc}
    \toprule
{\sf Model}  & {\sf L} & {\sf N} & {\sf P} & {\sf T} \\ 
    \midrule
LSTM \cite{huang2019recurrent}                                             & 3           &2048             & 130M                                & 0.71                     \\ 
SRU  \cite{huang2019recurrent}                                           & 12            &2048           & 156M                                  & 0.32                     \\ 
RPPU \cite{huang2019recurrent}                                       & 12            &1024            & 142M                                & 0.37                     \\ 
\midrule
Our SRU \cite{lei2017simple}                                     & 12            &1280            & 63M                                & 0.09                     \\ 
VSRU \cite{chung2015recurrent}                                     & 9            &1024            & 66M                                & 0.09                     \\ 

RRN \cite{palm2018recurrent}                                       & 9            &1024            & 64M                                & 0.09                     \\ 
RTN (Ours)                                     & 9            &1024            & 70M                                & 0.11                     \\ 
    \bottomrule
  \end{tabular}
  \end{footnotesize}
\end{table}

\subsection{Datasets}

\subsubsection{CHiME-2}
CHiME-2 corpus is designed for noise-robust speech recognition tasks. It was generated by convolving clean Wall Street Journal (WSJ0) \cite{garofalo2007csr} utterances with binaural room impulse responses (BRIRs) and real background noises at signal-to-noise ratios (SNRs) in the range [-6,9] dB. The training set contains 7138 simulated noisy utterances. The transcriptions are based on those of the WSJ0 training set. The development and test sets contain 2460 and 1980 simulated noisy utterances respectively. The WSJ0 text corpus is used to train a trigram language model with a vocabulary size of 5k.
 \begin{table}
	\caption{WER (\%) on test set of CHiME-2. }
	\label{tbl-chime2}
	\centering
	\begin{tabular}{ll}
		\toprule
		
		{\sf Model}  & WER  \\ 
		\midrule
		Kaldi DNN \cite{kaldi} & 29.1 \\
		LSTM      \cite{huang2019recurrent} & 26.1                  \\ 
		SRU       \cite{huang2019recurrent} & 26.2                  \\ 
		RPPU      \cite{huang2019recurrent} & 24.4              \\ 
		\midrule
		Our SRU \cite{lei2017simple}  & 25.8                  \\ 
		VSRU \cite{chung2015recurrent}  & 25.8                  \\ 
		RRN \cite{palm2018recurrent}  & 24.8                  \\ 
		RTN (Ours)  & \textbf{23.9}                  \\ 
		\bottomrule
	\end{tabular}
	\vskip -0.1in
\end{table}

 \subsubsection{CHiME-5}

CHiME-5 is the first large-scale corpus of real multi-speaker conversational speech in everyday home environments. It was originally designed for the CHiME 2018 challenge \cite{barker2018fifth}. Note that only the audio data recorded by binaural microphones is employed for training and evaluation in this experiment. 
The training dataset, development dataset and test dataset includes about 40 hours, 4 hours, and 5 hours of real conversational speech respectively. The evaluation was performed with a trigram language model trained from the transcription of CHiME-5.

\subsubsection{RelationalSWB}

RelationalSWB is a manually generated speech dataset based on the SWitchBoard (SWB) \cite{godfrey1992switchboard} conversational speech corpus, for which graph annotations among utterances are derived from the SWitchBoard dialogue act (SwDA) corpus \cite{jurafsky1997switchboard}. SWB training set includes about 260K utterances. SwDA extends it with dialogue act tags which are utterance-wise labels indicating the function of utterance in the dialog.

The training set of RelationalSWB contains 30K SWB training utterances. It is obtained by running the official script on Kaldi S5b \cite{povey2011kaldi}. We then select 1155 conversations that appear in both SwDA and SWB (not including Relational SWB training utterances) to construct the test set of RelationalSWB. This results in about 110K utterances.
Since SwDA only provides the utterance-wise dialogue act tags, we manually generated another dataset that contains binary relation labels between utterances. In doing so, we first generated the dialogue act tag pairs when the difference between two utterance indices is not more than 10. Then we ranked them by their frequencies. We used the top 20\% pairs as positive pairs and the remaining as negative pairs. Note that the segmentation scheme of utterances in SWB differs from that of SwDA. Therefore, we first detected the most similar utterance in SwDA for each utterance of SWB per dialogue using difflib\footnote{https://docs.python.org/3/library/difflib.html}; after that, the ground-truth relations of utterances on 
the test set could be obtained. 

\subsection{Feature Extraction and Preprocessing} 
The speech data in both CHiME-2 and CHiME-5 is pre-processed as 40-dimensional Mel-filterbank coefficients \cite{biem2001application} (for all neural-network-based models ), while it is 36-dimensional Mel-filterbank coefficients for RelationalWSB. All acoustic features are  calculated every 10ms. 
Input of all neural networks consists of the current frame together with its 4 future contextual frames. We performed speaker-level mean and variance normalization for the input to all models.

\subsection{Training Procedure}
All GMM-HMMs for CHiME-2 are trained using the standard Kaldi s5 recipe \cite{kaldi}. Note that the training recipe of CHiME-5 is modified from that of CHiME-2 as we only employed single-channel audio data for GMM-HMM training. They were then used to derive the state targets for subsequent RNN training through forced alignment for CHiME-2 and CHiME-5. Specifically, the state targets of CHiME-2 and CHiME-5 were obtained by aligning the training data with the DNN acoustic model through the iterative procedure outlined in \cite{iterative_procedure}. All RNNs were trained by optimizing the categorical cross-entropy using BPTT and SGD. We applied a dropout rate of 0.1 to the connections between recurrent layers.


\paragraph{Models}
We adopted SRU as the building block to construct all RNNs. For example, our VRNN implemented with SRU is called VSRU. We compare our proposed model with the following baseline models:
(i) SRU with 12 stacked layers; (ii) VSRU with 9 stacked layers in the decoder and 6 stacked layers in the encoder; (iii) RRN \cite{palm2018recurrent} with 9 stacked layers.
 \begin{figure}
   \vskip -0.3cm
    \begin{center}
        \resizebox{0.47\textwidth}{!}{\includegraphics{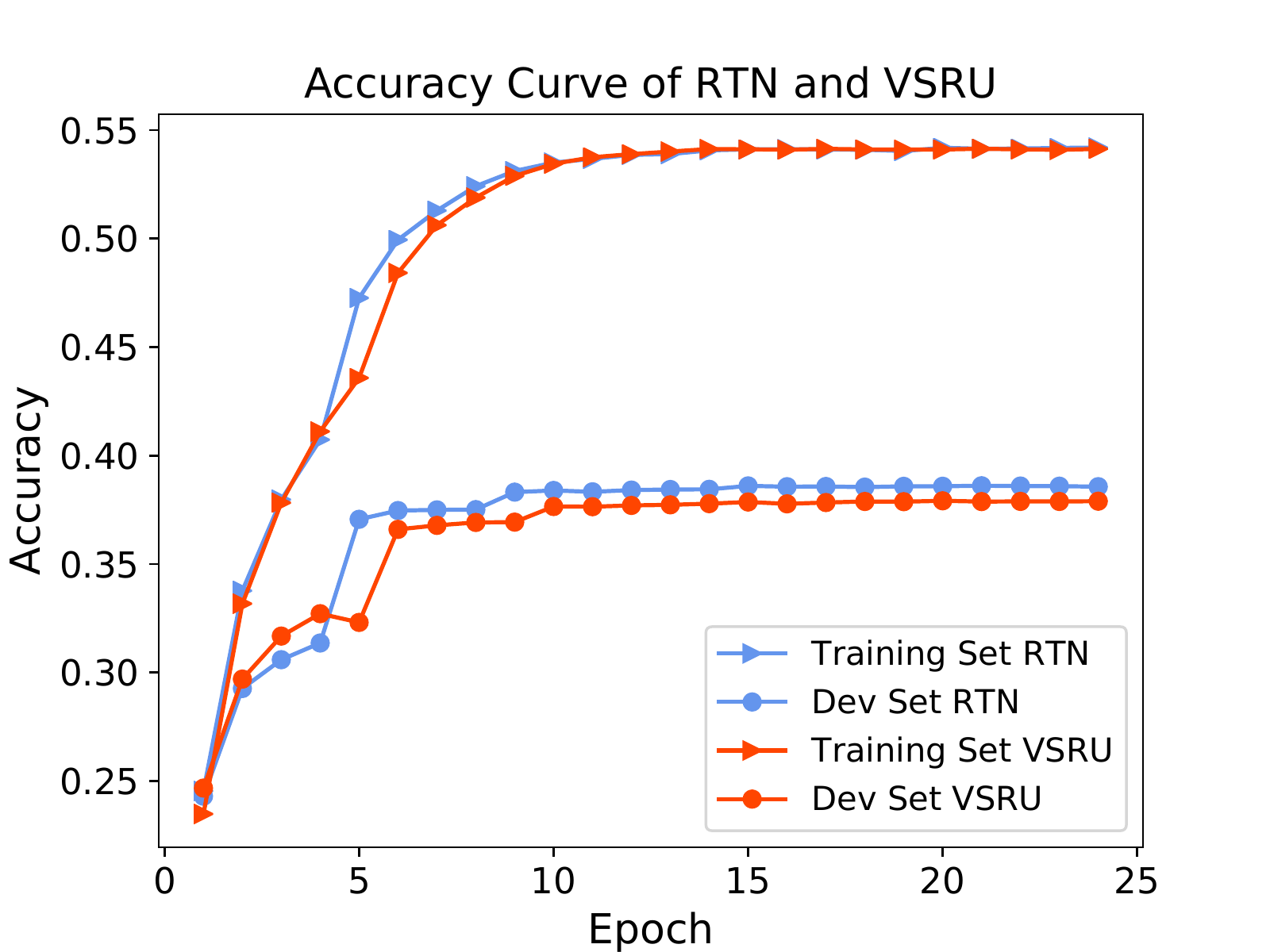}}
        \caption{ Frame Accuracy on CHiME-2 over multiple iterations  }
        \label{fig:acc}
    \end{center}
      \vskip -0.20in
\end{figure}
Among the baselines, RRN uses the same context information, i.e., multiple utterances, as our RTN. Other baselines use only one single utterance due to their model limitations. 
To ensure similar numbers of model parameters for different models, we set the number of hidden states per layer to 1280 for SRU and 1024 for VSRU, RRN, and RTN. Training with the original ELBO for VSRU and our RTN produces unstable results. Therefore, we follow the training strategy of $\beta$-VAE \cite{higgins2017beta} to reweight the importance of $\mathrm{KL}$ terms. The size of the latent vector of VSRU is set as 4 for CHiME-5 and 16 for CHiME-2 and RelationalSWB. 
Theoretically, our RTN can handle a very large pool of historical utterances. Considering the computational overhead, we set the number of historical utterances $o$ as 9, meaning that RTN can sequentially generate a relational structure for 10 utterances at a time, though it can be tuned to further improve performance. The size of node embedding $\mathbf{v}_{i}$ and graph embedding $\mathbf{e}_{i}$ are set as 128. 

\subsection{Results and Analysis}
\subsubsection{Preliminary study on CHiME-2}

Table~\ref{tbl-chime2_time} shows the configurations of baseline models and the new RTN model for all datasets. The training time per epoch for CHiME-2 is also reported.
In our experiments, the timing experiments used the PyTorch package and were performed on a machine running the Ubuntu operating system with a single Intel Xeon Silver 4214 CPU and a GTX 2080Ti GPU. Each model took around 25 iterations, and their average running time is reported. We can see that our SRU runs much faster than all models reported in \cite{huang2019recurrent} including SRU, due to the hardware optimization of SRU being adopted \cite{lei2017simple}. Besides, our RTN runs almost as fast as baseline models while having a similar number of parameters.  

\begin{figure}
    \begin{center}
        \resizebox{0.44\textwidth}{!}{\includegraphics{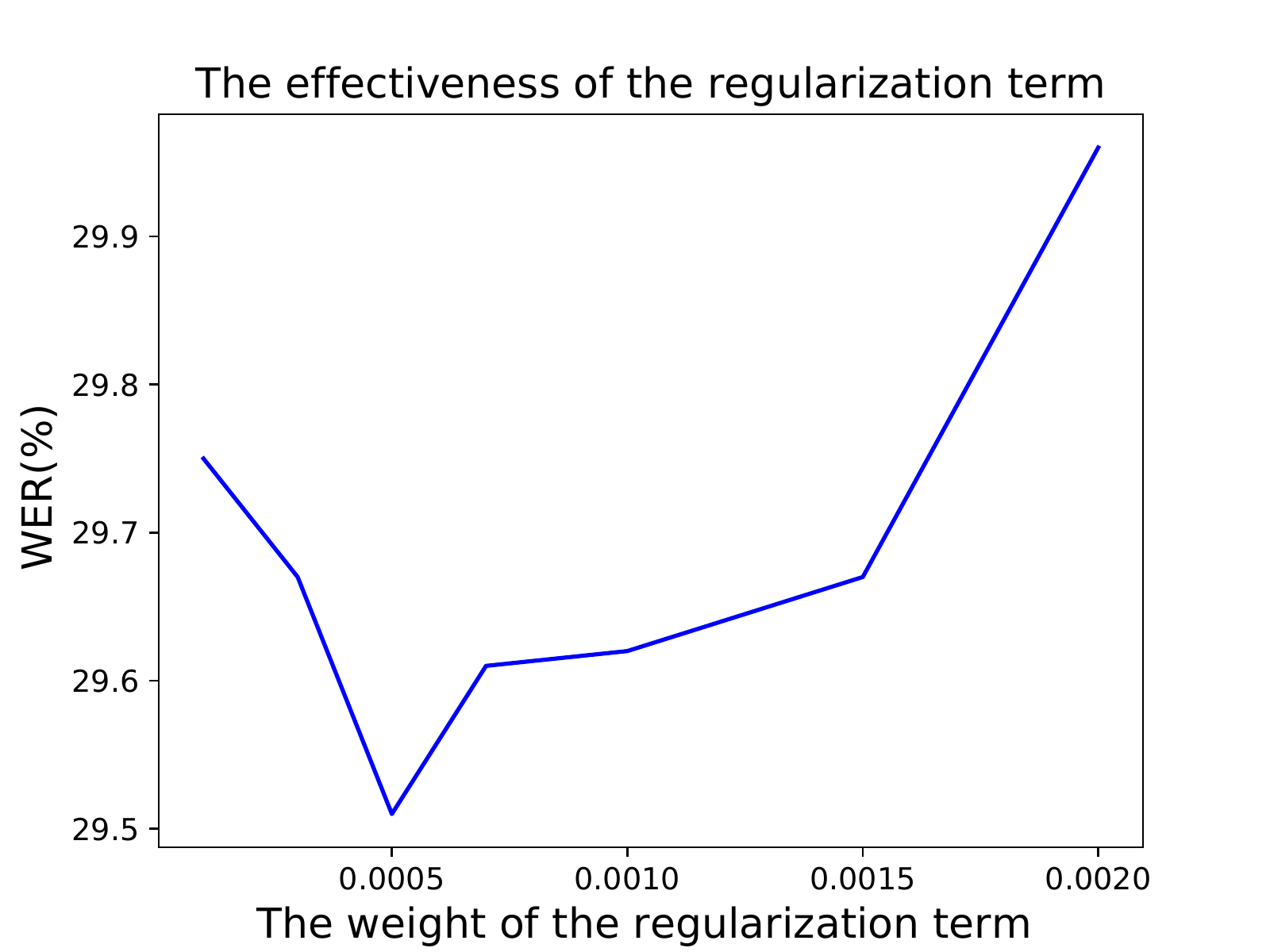}}
        \caption{WER on the development set of CHiME-2 by varying the weight of the regularization term. } 
        \label{fig:regu}
    \end{center}
      \vskip -0.20in
       
\end{figure}

Table~\ref{tbl-chime2} shows the word recognition performance of the baseline models and the new RTN model for CHiME-2.  First, we can see that SRUs, LSTM and VSRU achieve similar WERs. These baselines perform much better than the DNN baseline from Kaldi s5. Our RTN performs the best among all models in terms of WER, outperforming the RNNs by about 2.0\%. Compare with the state-of-the-art acoustic RRN and RPPU models, our RTN achieves 0.9\% and 0.5\% absolute WER reduction respectively. It is worth noting that our experimental configuration for all models is different from that of \cite{wang2016joint}, e.g. their system requires the clean data from WSJ0 to train an extra speech separation and to estimate training targets. We also report the detailed WERs as a function of the SNR in CHiME-2 in our supplemental materials.

To validate the effectiveness of the $\mathrm{KL}$ regularization term in RTN on CHiME-2, we varied its weight to find the best configuration (Figure \ref{fig:regu}). We obtained the best performance in the development set when the weight is about 0.0005. We therefore set it to 0.0005 as our final configuration based on this observation. These results demonstrate the effectiveness of our proposed objective function.

To evaluate the performance of the acoustic model by itself (i.e., without taking into account the language model's performance), we report the frame accuracy of VSRU and our RTN on CHiME-2.  It is displayed in Figure \ref{fig:acc}. Though two models achieve almost the same accuracy on training data set, note that RTN outperforms VSRU on Dev data set, yielding 0.8\% absolute frame accuracy improvement. This suggests that our RTN model can generalize significantly better than VSRU (see more details on the significance test in the Supplement). 

\subsubsection{Evaluation on Large-scale Real Conversational ASR}
\begin{table} 
	\caption{WER (\%) on eval of CHiME-5. }
	\label{tbl-chime5}
	\centering
	\begin{tabular}{lll}
		\toprule
		
		{\sf Model}   &WER  \\ 
		\midrule
		Kaldi DNN \cite{kaldi}             & 64.5            \\ 
		SRU \cite{lei2017simple}                & 62.6            \\ 
		VSRU \cite{chung2015recurrent}         & 61.6            \\ 
		RTN (Ours)               & \textbf{57.4}           \\ 
		\bottomrule
	\end{tabular}

\end{table}

We then conducted experiments on the first large-scale real conversation speech recognition dataset, CHiME-5.
The recognition results are shown in Table \ref{tbl-chime5}. We can see that our best baseline VSRU achieves a WER of 62.6\%, while our RTN has the lowest WER of 57.4\%. Overall, the RTN achieves 5.2\% and 4.2\% absolute WER reductions over SRU and VSRU respectively.

\begin{table}
	\caption{ Error rate(\%) of relation prediction on test set of RelationalSWB. }
	\label{tbl-SWB-Q}
	\centering
	\begin{tabular}{lll}
		\toprule
		
		{\sf Graph Type}  &  Err  \\ 
		\midrule
		Random Graph                           &50.0 \\
		Summary Graph                           & 28.6 \\
		Task-specific Graph                        & 28.7              \\ 

		\bottomrule
	\end{tabular}
	\vskip -0.10in
\end{table}

\subsubsection{Quantitative evaluation of DGP on RelationalSWB }

To quantitatively study the latent variables involved in DGP and Gaussian graph transform, we conducted analysis using the RTN acoustic model trained on RelationalSWB training set. We generated both summary graphs and task-specific graphs on the test set and then evaluated how well the edges of graphs match the ground-truth relations on RelationalSWB. 


To perform binary classification of the edges of the two graphs, we ranked the edges by their sample values and classified the top 20\% edges as `positive'. We report the error rate of such binary classification in Table \ref{tbl-SWB-Q}. We can see that our summary graph achieved an error rate of 28.6\%, which dramatically outperforms the baseline random graph by 21.4\%. This demonstrates our DGP's ability of generating meaningful relations among utterances without using any relational data during training. It marginally outperforms our task-specific graph. This is reasonable because after being transformed to fit with our downstream ASR task, it might lose information. We further perform a case study on the two graphs and seek to better understand such phenomena. 
\begin{table}[t]
\vspace{-2mm}
	\caption{WER (\%) on eval2000. }
	\label{tbl-SWB-30k}
	\centering
	\begin{tabular}{ll}
		\toprule
		
		{\sf Model}  & WER  \\ 
		\midrule
		Kaldi DNN \cite{kaldi}                                  & 26.8 \\
		SRU \cite{lei2017simple}                                   & 22.8              \\ 
		VSRU \cite{chung2015recurrent}                             & 22.6              \\ 
		RTN (Ours)                              & \textbf{20.8}             \\ 
		\bottomrule
	\end{tabular}
\vspace{-3mm}
\end{table}

The ASR performance of the RelationalSWB RTN is also reported.  Table \ref{tbl-SWB-30k} gives the WER comparison of all neural network models on eval2000 \cite{stolcke2000sri}. We can observe that RNN baseline systems achieve better WER than Kaldi DNN. Our RTN achieves the best WER of 20.8\%, yielding 8.8\% relative performance gain over the SRU baseline system. 

\begin{figure*}
     \vskip -0.15in
    \begin{center}
        \resizebox{0.76\textwidth}{!}{\includegraphics{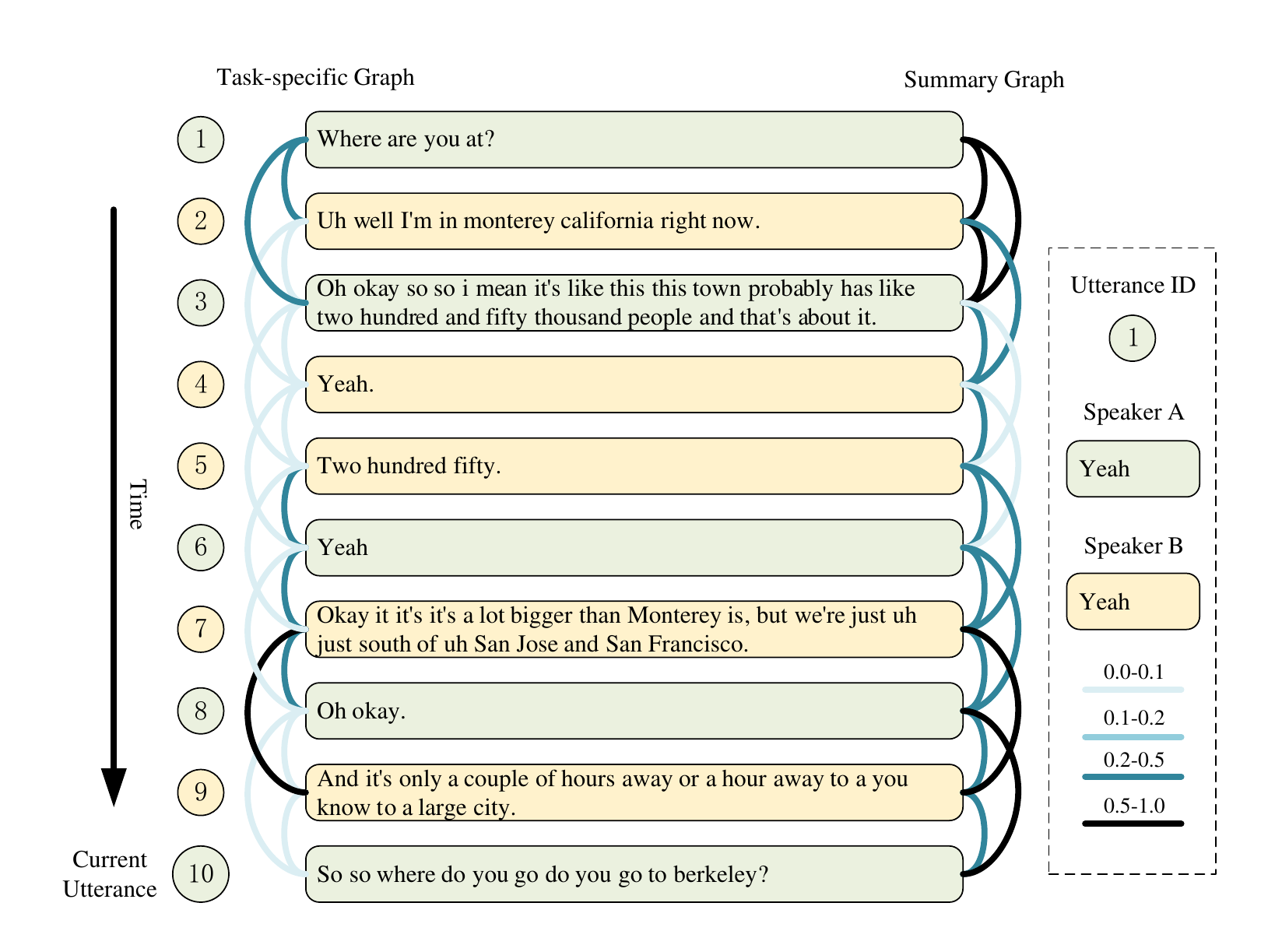}}
        \caption{Visualization of graphs generated by RTN }
        \label{fig:case}
    \end{center}
\end{figure*}

\subsubsection{Qualitative evaluation of DGP on SWB and SwDA }
We randomly selected ten sequential utterances from "sw02061-A\_008048-008171" to "sw02061-A\_010117-010354" in the RelationalSWB dataset. We then generated both the summary graph and task-specific graph (Figure \ref{fig:case}). For simplicity, the edge is removed when the difference between two node indices of the edge is more than 2.  The bottom utterance is the current one which the acoustic model is processing. We use min-max normalization to scale samples drawn from latent variables involved in the two graphs between 0 and 1. We color each edge according to such value. 

We can clearly see that the summary graph shown on the right is more densely connected than the left one. Indeed, less meaningful edges tend to be assigned with much smaller sample values, e.g. the edge between $10$-th utterance and $8$-th utterance. Interestingly, the first utterance labeled with ``Wh-question'' and the second one labeled with "Statement-non-opinion" exhibit ``strong'' relations; these ``strong'' relations are captured by both graphs. 
Again, this demonstrates that our model can generate complex relational structure among utterances.(see more examples of generated graphs in the Supplement)     

\section{Conclusion}
We propose a novel graph learning approach called deep graph random process (DGP) for relational thinking modelling. We show that our model can generate graphs representing complex relationships among utterances without using any relational data during training. We demonstrate that our DGP can be conveniently combined with a neural network model for a downstream task such as speech recognition via the graph Gaussian transform. Our experiments on CHiME-2 and CHiME-5 show that our method outperforms other RNN models in ASR. 

\section*{Acknowledgements}
The authors would like to thank the anonymous reviewers for their
insightful comments and suggestions, Dr. Vincent Y. F. Tan, Dr. David Grunberg and Dr. Graham
Percival for their assistance in proofreading the initial manuscript.
This project was partially funded by research grant R-252-000-A56-114
from the Ministry of Education, Singapore.

\nocite{langley00}
\bibliography{abbrev,mybib}
\bibliographystyle{icml2020}
\end{document}


\language0
\lefthyphenmin=2
\righthyphenmin=3

\maketitle

\section{Proof of Theorem 1}

\begin{theorem}\label{thm:t1}
Let $\mathcal{N}(\mu,\sigma^{2})$ denotes a Gaussian distribution with $\mu<1/2$, and let $\mathcal{B}(n,\lambda)$  denotes a Binomial distribution with $n\rightarrow +\infty$ and $\lambda\rightarrow 0$, where  $n$ is increasing while $\lambda$ is decreasing.    
There exists a real constant $m$ such that if $m=n\lambda$ and if we define:
\begin{align*}
 f_{1}(x) &= \mathrm{KL}(\mathcal{N}(x,x(1-x)) || \mathcal{N}(\mu,\sigma^{2}) )\\
 f_{2}(x) &= \mathrm{KL}(\mathcal{N}(x,x(1-x)) || \mathcal{N}(n \lambda,n \lambda(1-\lambda) )\\
 f_{2}^{*} &= \min_x f_{2}(x), \  where \  x \in (0,1)
\end{align*}
we have that:  $ f_{1}(x)$ attains its minimum on the interval $(0,1)$ and $ f_{2}(x)-f_{2}^{*}$ is bounded on the interval $(0,\sqrt{2}/2-1/2)$, with:
\begin{equation}
x=m=\frac{1+l-\sqrt{1+l^{2}}}{2}, \ where  \ l=\frac{2\sigma^{2}}{1-2\mu}
\nonumber
\end{equation}
\end{theorem}

\begin{proof}
The derivative of the function $f_{1}(x)$ over $x$ can be written as:
\begin{equation}
    {f}'_{1}(x) =x^{2}-(1+\frac{2\sigma^{2} }{1-2\mu})x + \frac{\sigma^{2} }{1-2\mu}
    \nonumber
\end{equation}
We set it as 0 and solve for $x$, giving 
\begin{equation} \label{1}
x= \begin{cases}
  \frac{1+l-\sqrt{1+l^{2}}}{2} & \text{ if } \mu<1/2 \\ 
\frac{1+l+\sqrt{1+l^{2}}}{2}  & \text{ if } \mu>1/2 
\end{cases}
,\ where \ l=\frac{2\sigma^{2}}{1-2\mu}
\end{equation}

Let $x=n\lambda$, the function $f_{2}(x)$ can be written as:
\begin{equation}
f_{2}(n\lambda)= \sqrt{      
\frac {1-n\lambda}
{1-\lambda}
}
+
\frac{1-\lambda}{2(1-n\lambda)}
-
1/2
\nonumber
\end{equation}

Let $g(n \lambda)=\lim_{\lambda \rightarrow 0}   f_{2}(n\lambda)$, we have
\begin{equation}
g(n\lambda)= \sqrt{      
1-n\lambda}
+
\frac{1}{2(1-n\lambda)}
-
1/2 
\nonumber
\end{equation}

Let $z=\sqrt{1-n\lambda}$, we have:
\begin{equation}
g(z)= z +1/(2z^{2}) -1/2 
\nonumber
\end{equation}
The derivative of function $g(z)$ over $z$ can be written as:
\begin{equation}
    {g}'(z) = 1 -1/z^{3}
    \nonumber
\end{equation}
Given that $z\in (0,1)$ , we have ${g}'(z)<0$. Then $g(z)$ attains its minimum 1 when $z$ approaches 1. Equivalently,  $f_{2}(n\lambda)$ attains its minimum 1 when $n\lambda$ approaches 0.

Considering Eq.\eqref{1}, we find that $n\lambda$ is bounded on $(0,1/2)$ if $\mu<1/2$,   

We then calculate the difference between $f_{2}(n\lambda)$ and its minimum. It can be written as
\begin{align*}
\Delta f_{2}(n\lambda)&= \lim_{\lambda \rightarrow 0}[ f_{2}(x)-f_{2}^{*}] \\ &=      g(n\lambda)-1 \\ &=\sqrt{      
1-n\lambda}
+
\frac{1}{2(1-n\lambda)}
-
3/2
\end{align*}
Let $m=n\lambda$, the derivative of function $\Delta f_{2}(m)$ over $m$ can be written as:
\begin{equation}
    \Delta {f}'_{2}(m) =  \frac{1-{(1-m)}^{3/2}}{2{(1-m)}^{2}}>0
    \nonumber
\end{equation}

Then $\Delta f_{2}(m)$ is monotonically increasing over $(0,1/2)$.
Therefore $\Delta f_{2}(m)$ is bounded on $(0,\sqrt{2}/2-1/2)$
\end{proof}

\section{Proof of Theorem 2}
\begin{theorem}\label{thm:t2}
Suppose we are given two Binomial distributions, $\mathcal{B}(n,\lambda)$ and $\mathcal{B}(n,\lambda^{0})$ with $n\rightarrow +\infty$, $\lambda^{0}\rightarrow 0$ and $\lambda\rightarrow 0$ , where  $n$ is increasing while $\lambda$ and $\lambda^{0}$ are decreasing.  There exists a real constant $m$ and another real constant $m^{(0)}$, such that  if $m=n\lambda$ and $m^{(0)}=n\lambda^{(0)}$ and if  $\lambda > \lambda^{(0)} $, we have:
\begin{equation}
\mathrm{KL}(\mathcal{B}(n,\lambda) || \mathcal{B}(n,\lambda^{0}))<m\log\frac{m}{{m}^{(0)}} 
+ (1-m )  \log\frac{1-m+m^{2}/2}{1-m^{(0)}+{m^{(0)}}^{2}/2} \nonumber
\end{equation}
\end{theorem}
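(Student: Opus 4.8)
The plan is to reduce the left-hand side to a closed form, pass to the $n\to\infty$ limit dictated by the hypotheses, and then verify a one-dimensional inequality by a monotonicity argument. First I would use that $\mathcal{B}(n,\lambda)$ is the law of a sum of $n$ independent Bernoulli$(\lambda)$ variables, so that KL tensorizes over the coordinates and
\begin{equation}
\mathrm{KL}(\mathcal{B}(n,\lambda)\,\|\,\mathcal{B}(n,\lambda^{0})) = n\Big[\lambda\log\tfrac{\lambda}{\lambda^{0}} + (1-\lambda)\log\tfrac{1-\lambda}{1-\lambda^{0}}\Big].
\nonumber
\end{equation}
Substituting $\lambda=m/n$ and $\lambda^{0}=m^{(0)}/n$ turns the first term into exactly $m\log(m/m^{(0)})$ and the second into $(n-m)\log\frac{1-m/n}{1-m^{(0)}/n}$.

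Next I would take the limit in the regime of the theorem ($n\to\infty$, $\lambda,\lambda^{0}\to 0$ with $m,m^{(0)}$ fixed), just as the $\lambda\to0$ limit is taken in the proof of \thmref{thm:t1}. Since $n\log(1-c/n)\to -c$, the second term tends to $m^{(0)}-m$, so the whole left-hand side converges to the Poisson divergence $m\log(m/m^{(0)})+m^{(0)}-m$. Cancelling the common $m\log(m/m^{(0)})$, the theorem reduces to the scalar inequality
\begin{equation}
m^{(0)} - m < (1-m)\log\frac{h(m)}{h(m^{(0)})}, \qquad h(x) = 1-x+\tfrac{x^{2}}{2},
\nonumber
\end{equation}
to be shown for $0<m^{(0)}<m<1$; note that $\lambda>\lambda^{0}$ forces $m>m^{(0)}$, and \thmref{thm:t1} keeps $m$ well inside $(0,1)$.

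For this last step I would fix $m$ and treat the gap as a function of the smaller parameter, $\Psi(t) := (1-m)\log\frac{h(m)}{h(t)} + m - t$ on $t\in(0,m]$. One immediately has $\Psi(m)=0$, and since $h'(t)=-(1-t)$,
\begin{equation}
\Psi'(t) = (1-m)\,\frac{1-t}{h(t)} - 1.
\nonumber
\end{equation}
The crucial estimate is $h(t)=1-t+t^{2}/2>1-t$ for $t>0$, which gives $\frac{1-t}{h(t)}<1$ and hence $\Psi'(t)<(1-m)-1=-m<0$ throughout $(0,1)$. Thus $\Psi$ is strictly decreasing, so $\Psi(m^{(0)})>\Psi(m)=0$ whenever $m^{(0)}<m$, which is exactly the desired inequality.

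I expect the scalar inequality to be the only genuine obstacle: the closed form and the Poisson limit are routine once identified, but the gap $\Psi$ is not obviously signed, and the argument succeeds only because of the clean bound $\frac{1-t}{h(t)}<1$ that renders $\Psi'$ uniformly negative. A secondary point to watch is the difference between the finite-$n$ statement and its limit: a short expansion shows the second KL term equals $(m^{(0)}-m)+\frac{(m-m^{(0)})^{2}}{2n}+O(n^{-2})$, i.e.\ it approaches $m^{(0)}-m$ strictly from above, so the strict inequality is safe in the limit and, by continuity, for all sufficiently large $n$. I would therefore state explicitly that the claim is understood in this asymptotic sense.
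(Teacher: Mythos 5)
Your proof is correct, and it takes a genuinely different route from the paper's. Both start from the same exact identity $\mathrm{KL}=m\log\frac{m}{m^{(0)}}+n(1-\lambda)\log\frac{1-\lambda}{1-\lambda^{(0)}}$, but from there the paper stays at finite $n$: it writes $n\log(1-\lambda)=\log(1-\lambda)^{n}$, expands $(1-\lambda)^{n}$ by the binomial theorem with a Lagrange remainder, discards the remainder and the $-n\lambda^{2}/2$ terms through sign and monotonicity arguments (including a parabola comparison that needs $m<1/2$ from \thmref{thm:t1}), and finally trades the prefactor $1-\lambda$ for $1-m$ using that the logarithm is negative. You instead pass to the Poisson limit, which collapses the left side to the exact value $m\log(m/m^{(0)})+m^{(0)}-m$, and reduce the theorem to the one-variable inequality $\Psi(m^{(0)})>0$, settled by the uniform bound $\Psi'(t)<-m<0$ coming from $h(t)>1-t$. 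The two arguments are secretly about the same object --- your $h(x)=1-x+x^{2}/2$ is the degree-two truncation of $e^{-x}=\lim(1-x/n)^{n}$ that the paper manipulates term by term --- but yours is shorter, easier to audit, and needs only $m<1$ rather than $m<1/2$. The price is that your version is explicitly asymptotic: your expansion showing the second KL term exceeds its limit $m^{(0)}-m$ by $\frac{(m-m^{(0)})^{2}}{2n}+O(n^{-2})$ is exactly the right thing to record, since the convergence is from above and the strict limiting inequality therefore transfers only to all sufficiently large $n$; given that the theorem's hypotheses already send $n\to+\infty$, this matches the intended reading, and you are right to state that caveat explicitly.
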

\begin{proof}

Let $m=n\lambda$ and $m^{(0)}=n\lambda^{(0)}$, we have 
\begin{equation} \label{3}
\begin{split}
   \mathrm{KL}(\mathcal{B}(n,\lambda) || \mathcal{B}(n,\lambda^{(0)})) &=n\lambda\log\frac{\lambda}{{\lambda}^{(0)}} 
+ n(1-\lambda)  \log\frac{1-\lambda}{1-\lambda^{(0)}} 
\\& =  n\lambda\log\frac{n\lambda}{{n\lambda}^{(0)}}
+ n(1-\lambda)  \log\frac{1-\lambda}{1-\lambda^{(0)}} \\& =  m\log\frac{m}{m^{(0)}} 
+ n(1-\lambda) \log\frac{1-\lambda}{1-\lambda^{(0)}}
\end{split}
\end{equation}

We then take the right part,
\begin{equation}
    g= n(1-\lambda) \log\frac{1-\lambda}{1-\lambda^{(0)}} = (1-\lambda) \log\frac{{(1-\lambda})^{n}}{(1-\lambda^{(0)})^{n}} 
    \nonumber
\end{equation}
By Taylor series' theorem with Lagrange remainder, $g$ can be written as:
\begin{equation}
        g= (1-\lambda) \log\frac{
1-n\lambda + \frac{n(n-1)}{2} \lambda^{2} +R_{j=2}(-\lambda)    }
{    1-n\lambda^{(0)} + \frac{n(n-1)}{2} {\lambda^{(0)}}^{2} +R_{j=2}(-\lambda^{(0)})      } 
\nonumber
\end{equation}
There exists a $\theta \in (0,1)$ such that,
\begin{equation}
    R_{j=2}(x) = \frac{x^{3}(n-2)(n-1)n(1+x\theta)^{n-3}}{6} 
    \nonumber
\end{equation} 
Given that $n \rightarrow +\infty$  and $x \in (-1,1)$ , we have $(R)'_{j=2}(x) > 0$.
Therefore, $R_{j=2}(x)$ is  monotonically increasing over $(-1,1)$. Since $\lambda>\lambda^{0}$, we have 
\begin{equation} \label{7}
R_{j=2}(-\lambda)<R_{j=2}(-\lambda^{(0)}) 
\end{equation}

We then seek to prove: 
\begin{equation}
 k=  \frac{
1-n\lambda + \frac{n(n-1)}{2} \lambda^{2}     }
{    1-n\lambda^{(0)} + \frac{n(n-1)}{2} {\lambda^{(0)}}^{2}       } <1  
\nonumber
\end{equation}

Let $f(x) = n(n-1)x^{2}/2 - nx + 1$, we have 
\begin{equation}
    k = \frac{f(\lambda)}{f(\lambda^{(0)})}
    \nonumber
\end{equation}
Here, $f(x)$ is an U-shaped  parabola with axis $x=1/(n-1)$. 
By theorem \ref{thm:t1}, we have $n\lambda<1/2$, then we have $\lambda^{0}<\lambda<1/(n-1)$, then $f(x)$ is monotonically increasing over the support of $\lambda^{0}$ and $\lambda$, namely 
\begin{equation} \label{10}
   f(\lambda)<f(\lambda^{0}) 
\end{equation}
With Eq.\eqref{7} and  Eq.\eqref{10}, $g$ can be written as:
\begin{equation}
\begin{split}
            g&< (1-\lambda) \log\frac{
1-n\lambda + \frac{n(n-1)}{2} \lambda^{2}    }
{    1-n\lambda^{(0)} + \frac{n(n-1)}{2} {\lambda^{(0)}}^{2}       } \\& = (1-\lambda) \log\frac{
1-m +    m^{2}/2 -n\lambda^{2}/2    }
{    1-m^{(0)} +  {m^{(0)}}^{2}/2 -n{\lambda^{(0)}}^{2}/2       } 
\end{split}
\nonumber
\end{equation}
Similarly, let $h(x)= 1-x +    x^{2}/2$. It is an U-shaped  parabola with axis $x=1$ such that  
\begin{equation}
    -n\lambda^{2}/2<-n{\lambda^{(0)}}^{2}/2 
    \nonumber
\end{equation}
\begin{equation}
     1-m +    m^{2}/2< 1-m^{(0)} +  {m^{(0)}}^{2}/2
     \nonumber
\end{equation}
Then we have 
\begin{equation} \label{14}
\begin{split}
    g&<(1-\lambda) \log\frac{
1-m +    m^{2}/2    }
{    1-m^{(0)} +  {m^{(0)}}^{2}/2        }
\\&< (1-n\lambda) \log\frac{
1-m +    m^{2}/2    }
{    1-m^{(0)} +  {m^{(0)}}^{2}/2        }
\\&= (1-m) \log\frac{
1-m +    m^{2}/2    }
{    1-m^{(0)} +  {m^{(0)}}^{2}/2        }
\end{split}{}
\end{equation}
Combining Eq.\eqref{3} and Eq.\eqref{14} concludes the proof. 
\end{proof}

\section{Test of Significance}

The statistical significance test tool sc\_stats from National Institute of Standards and Technology (NIST) is used to compare our RTN and the baseline VSRU on CHiME-2 HMM states classification task. The test results find a significant difference in performance between the RTN and the VSRU at the level of $p<0.001$.

\section{Table of detailed WER (\%) on the CHiME-2 test set}

We report the detailed WERs as a function of the SNR in CHiME-2 shown in Table \ref{tbl-detail}. For all SNRs, the RTN outperforms other Baseline RNNs including LSTM, SRU by a large margin. It outperforms the state-of-the-art models including VSRU, RRN and RPPU for most SNRs. This suggests that incorporating the relational thinking into speech recognition lends itself to the model’s robustness.
\begin{table*} [h]
  \caption{Detailed WER (\%) on the CHiME-2 test set.}
\label{tbl-detail}
  \centering
  \begin{tabular}{lllllll}

    \toprule

{\sf Model} & -6 dB & -3 dB & 0 dB & 3 dB & 6 dB & 9 dB \\
    \midrule
LSTM   \cite{huang2019recurrent}                      & 42.4  & 33.5  & 26.7 & 21.1 & 17.3 & 15.3 \\ 
SRU   \cite{huang2019recurrent}                       & 42.5  & 34.0  & 26.2 & 22.2 & 17.4 & 15.1 \\ 
RPPU     \cite{huang2019recurrent}                    & 39.9  & 31.1  & 24.9 & 20.3 & 16.0 & \textbf{13.2} \\ 
\midrule 

Our SRU \cite{lei2017simple}                          &42.1	&33	&26.1	&20.7	&16.8	&15.1 \\ 
VSRU      \cite{chung2015recurrent}                     &41.5	&32.8	&26.2	&20.9	&16.9	&16.1 \\ 
RRN \cite{palm2018recurrent}                          &40.2	&32.1	&25.9	&20.2	&16.2	&14.0 \\ 
RTN (Ours)                         & \textbf{39.0}  & \textbf{30.4}  & \textbf{25.4} & \textbf{19.4} & \textbf{15.5} & 13.8 \\ 
    \bottomrule
  \end{tabular}
  \vspace{0.3in} 
\end{table*}

\newpage

\section{More examples of graphs generated by RTN}

\begin{figure*}[h]
     \vskip -0.15in
    \begin{center}
        \resizebox{0.82\textwidth}{!}{\includegraphics{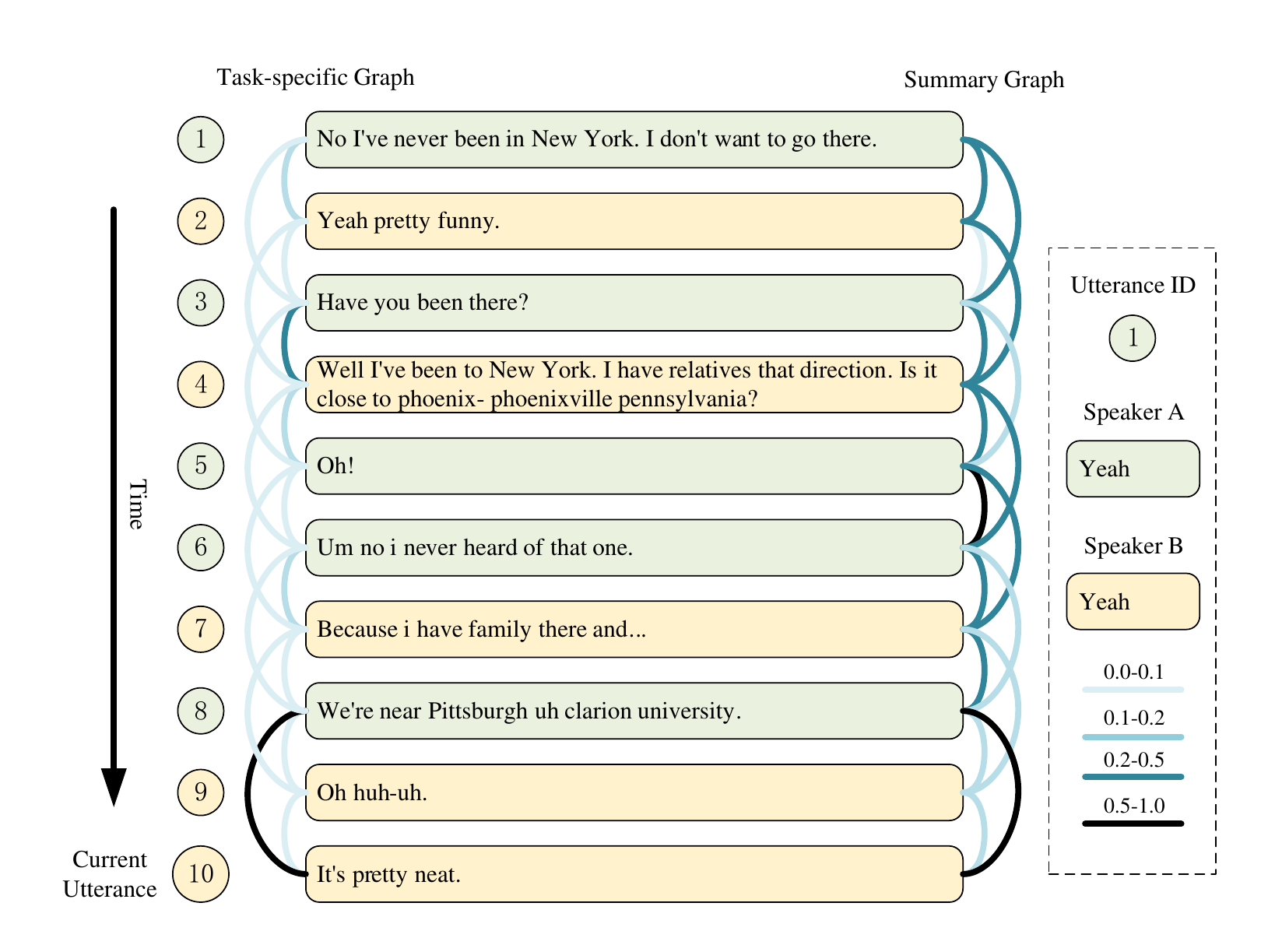}}
        \caption{Example of graphs generated by RTN: ten sequential utterances from "sw02262-A\_029098-029769" to "sw02262-B\_031645-031828"}
        \label{fig:case}
    \end{center}
\end{figure*}

\begin{figure*}[h]
     \vskip -0.15in
    \begin{center}
        \resizebox{0.82\textwidth}{!}{\includegraphics{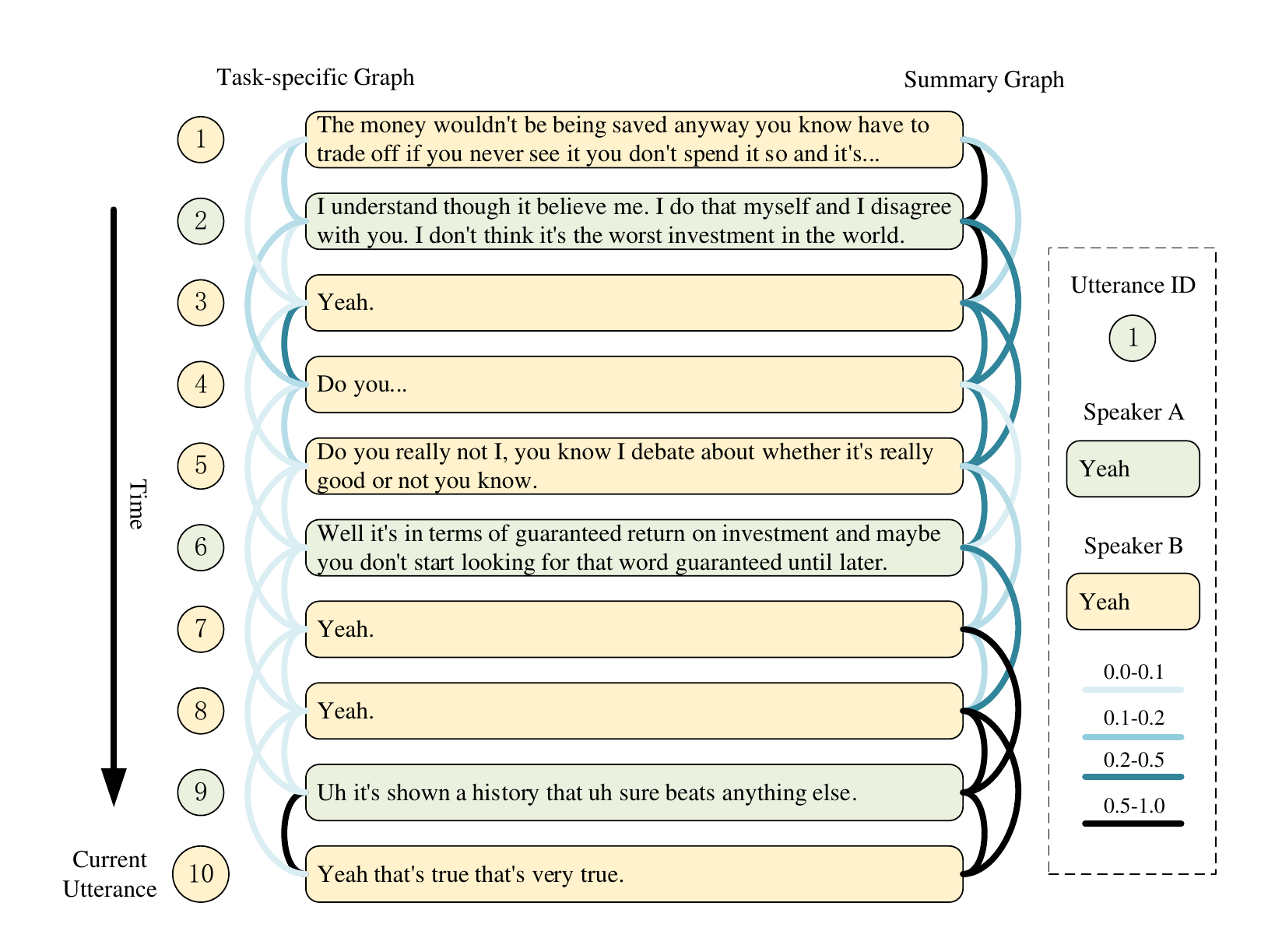}}
        \caption{Example of graphs generated by RTN: ten sequential utterances from "sw02062-B\_019277-020062" to "sw02062-B\_022871-023232" }
        \label{fig:case}
    \end{center}
\end{figure*}

\begin{figure*}[h]
     \vskip -0.15in
    \begin{center}
        \resizebox{0.82\textwidth}{!}{\includegraphics{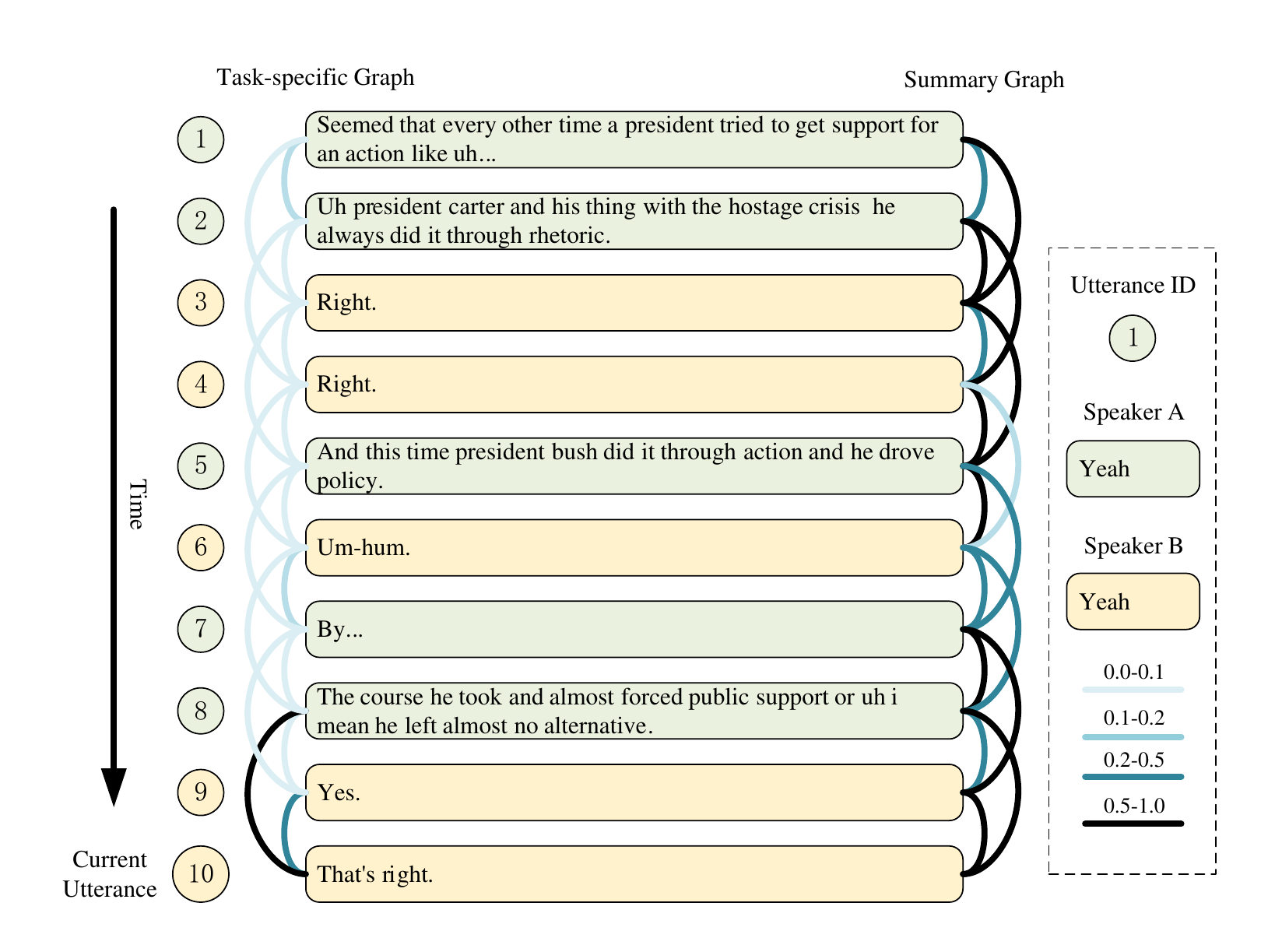}}
        \caption{Example of graphs generated by RTN: ten sequential utterances from "sw02130-A\_002749-003357" to "sw02130-B\_005687-005840" }
        \label{fig:case}
    \end{center}
\end{figure*}




\clearpage
\bibliography{abbrev,mybib}
\bibliographystyle{apalike}

